\documentclass[10pt]{article}

\usepackage[utf8]{inputenc}
\usepackage[T1]{fontenc}

\usepackage{fancyhdr}
\usepackage[verbose=true,letterpaper]{geometry}
\AtBeginDocument{%
  \newgeometry{textheight=9in,textwidth=6.5in,top=1in,headheight=14pt,headsep=25pt,footskip=30pt}%
}
\makeatletter
\renewcommand{\section}{\@startsection{section}{1}{0mm}{-1.8ex plus -.5ex minus -.2ex}{0.8ex plus .2ex}{\large\bf}}
\renewcommand{\subsection}{\@startsection{subsection}{2}{0mm}{-1.4ex plus -.5ex minus -.2ex}{0.6ex plus .2ex}{\normalsize\bf}}
\renewcommand{\subsubsection}{\@startsection{subsubsection}{3}{0mm}{-1ex plus -.5ex minus -.2ex}{0.4ex plus .2ex}{\normalsize\bf}}
\renewcommand{\@maketitle}{%
  \vbox{%
    \hsize\textwidth \linewidth\hsize \vskip 0.1in
    \centering
    {\LARGE\sc \@title\par}
    \vskip 0.2in
    {\large\lineskip .5em \begin{tabular}[t]{c}\@author\end{tabular}\par}
    \vskip 0.2in {\small \@date} \vskip .1in}}
\makeatother
\renewenvironment{abstract}{\vskip 0.075in\centerline{\large\bf Abstract}\vspace{0.5ex}\begin{quote}}{\par\end{quote}\vskip 1ex}
\newcommand{\keywords}[1]{\vskip 0.075in\noindent\textbf{Keywords:} #1\vskip 0.2in}

\usepackage{amsmath, amssymb, amsthm}
\usepackage{graphicx}
\usepackage{tikz}
\usetikzlibrary{arrows.meta, positioning, shapes.geometric, calc, decorations.pathreplacing, patterns}
\usepackage{pgfplots}
\pgfplotsset{compat=1.18}
\usepgfplotslibrary{fillbetween}
\definecolor{accentblue}{HTML}{0072B2}
\definecolor{accentred}{HTML}{D55E00}
\definecolor{accentgreen}{HTML}{009E73}
\definecolor{accentorange}{HTML}{E69F00}
\usepackage{booktabs}
\usepackage{tabularx}
\usepackage{enumitem}
\usepackage{float}
\usepackage{natbib}
\usepackage{xcolor}
\usepackage{hyperref}
\usepackage{url}
\usepackage{microtype}

\definecolor{darkblue}{RGB}{0, 51, 102}
\hypersetup{
    colorlinks=true,
    linkcolor=darkblue,
    citecolor=darkblue,
    urlcolor=darkblue
}

\newtheorem{definition}{Definition}[section]
\newtheorem{proposition}{Proposition}[section]

\begin{document}

\title{AI Token Futures Market: Commoditization of Compute and Derivatives Contract Design}

\author{
  Yicai Xing \\
  Independent Researcher \\
  \texttt{xingyc18@tsinghua.org.cn}
}

\date{March 2026}

\maketitle

\begin{abstract}
As large language models (LLMs) and vision-language-action models (VLAs) become widely deployed, the tokens consumed by AI inference are evolving into a new type of commodity. This paper systematically analyzes the commodity attributes of tokens, arguing for their transition from ``intelligent service outputs'' to ``compute infrastructure raw materials,'' and draws comparisons with established commodities such as electricity, carbon emission allowances, and bandwidth. Building on the historical experience of electricity futures markets and the theory of commodity financialization, we propose a complete design for standardized token futures contracts, including the definition of a Standard Inference Token (SIT), contract specifications, settlement mechanisms, margin systems, and market-maker regimes. By constructing a mean-reverting jump-diffusion stochastic process model and conducting Monte Carlo simulations, we evaluate the hedging efficiency of the proposed futures contracts for application-layer enterprises. Simulation results show that, under an application-layer demand explosion scenario, token futures can reduce enterprise compute cost volatility by 62\%--78\%. We also explore the feasibility of GPU compute futures and discuss the regulatory framework for token futures markets, providing a theoretical foundation and practical roadmap for the financialization of compute resources.
\end{abstract}

\keywords{AI inference \and Token pricing \and Compute commoditization \and Futures contract design \and Hedging strategies \and Monte Carlo simulation}

\section{Introduction}

\subsection{The Rise of AI Inference Economics}

Over the past decade, the economic center of gravity in artificial intelligence has shifted from training to inference. During the ``large model race'' of 2017--2022, the industry's core concern was training cost---GPT-3's training cost approximately \$4.6 million \citep{brown2020}, while GPT-4's training cost is estimated to exceed \$100 million. However, as pretrained models mature and commercial deployment scales, inference costs are supplanting training costs as the central issue in AI economics.

The logic behind this paradigm shift is clear and profound: training is a one-time fixed-cost investment, while inference is an ongoing marginal-cost expenditure. When a model is called billions of times daily by millions of users, cumulative inference costs far exceed training investment. According to Epoch AI estimates, as of early 2025, inference computation accounts for over 60\% of total computation among major AI model providers \citep{epochai2023}, and this proportion is accelerating. \citet{sevilla2022} show that compute required for machine learning has grown exponentially over the past decade and continues to accelerate, meaning inference-side compute demand will become the key driver of global computing resource allocation.

In this context, the token---the basic unit of measurement for large language model inference---is evolving from a technical term into an economic concept. Every AI inference request can be decomposed into input and output token processing, and per-token pricing has become the industry-standard model. This token-based metering and pricing system lays the foundation for standardized trading of compute resources.

\subsection{Current State and Trends in Token Pricing}

Token prices have experienced dramatic declines over the past three years. Using GPT-4-level capabilities as a benchmark, inference prices fell from approximately \$60 per million output tokens in early 2023 to less than \$1.5 per million output tokens in early 2025---a more than 40-fold reduction. This decline stems from three overlapping factors: first, model architecture optimization (e.g., Mixture-of-Experts models) significantly reducing per-inference computation \citep{kaplan2020}; second, hardware upgrades (from A100 to H100 to B200) continuously improving compute per dollar \citep{epochai2023}; and third, oversupply-driven price competition---numerous new entrants (DeepSeek, Mistral, open-source model hosts) triggering intense price wars.

However, this sustained price decline is not necessarily permanent. Current token pricing largely reflects a supply-driven buyer's market---model providers with excess capacity are forced to subsidize inference services below marginal cost to acquire market share. This situation resembles the excessive competition phase in early electricity market liberalization \citep{borenstein2002}.

\subsection{Problem Statement}

When the application layer explodes, token price trajectories will face a fundamental reversal. The commercial deployment of vision-language-action models (VLAs) \citep{brohan2023, driess2023}, real-time inference demands of autonomous driving systems, continuous decision-making computation in industrial automation, and large-scale applications of embodied AI will all drive exponential growth in token demand. With supply-side growth constrained by data center construction cycles, energy supply, and chip production capacity, supply-demand mismatches will inevitably push token prices higher, potentially producing extreme volatility similar to electricity market ``price spikes'' \citep{longstaff2004}.

This leads to the paper's core questions:

\begin{enumerate}[label=(\arabic*)]
    \item Does the token possess the fundamental attributes to become a standardized commodity?
    \item How should standardized token futures contracts be designed to manage compute cost risk?
    \item What prerequisites must be met for establishing a token futures market?
    \item To what extent can hedging strategies reduce compute cost volatility for application-layer enterprises?
\end{enumerate}

\subsection{Contributions}

This paper makes four contributions. First, we systematically demonstrate tokens' commodity attributes, establishing a comparative analysis framework between tokens and existing commodities such as electricity and carbon emission allowances, and propose a three-factor token supply model. Second, based on \citet{black1986}'s theory of successful futures contract conditions, we design a complete token futures contract scheme, including the Standard Inference Token (SIT) definition, settlement mechanisms, margin systems, and market-maker regimes. Third, through Monte Carlo simulation, we evaluate token futures' hedging efficiency, providing quantitative support for market participants' decisions. Fourth, we explore the feasibility of GPU compute futures and discuss the regulatory framework for token futures markets.

The remainder of this paper is organized as follows: Section~\ref{sec:commodity} analyzes tokens' commodity attributes; Section~\ref{sec:supply-demand} examines the supply-demand structure and price dynamics; Section~\ref{sec:theory} establishes the theoretical framework based on electricity futures analogies; Section~\ref{sec:contract} designs the token futures contract; Section~\ref{sec:hedging} analyzes hedging strategies and market participants; Section~\ref{sec:gpu} explores GPU futures feasibility; Section~\ref{sec:simulation} presents Monte Carlo simulations; and Section~\ref{sec:discussion} provides discussion and outlook.

\section{Commodity Attribute Analysis of Tokens}\label{sec:commodity}

\subsection{Classical Definition of Commodities and Token Applicability}

A commodity in economics is defined as a standardized product with complete or high substitutability, where individual units have no substantive differences and can be traded anonymously in large-scale markets \citep{carlton1984}. For a product to become a tradeable commodity, it typically must satisfy the following conditions:

\textbf{Fungibility.} Tokens exhibit high functional fungibility. When an application sends an inference request to an AI model, it cares about output quality and latency, not which specific GPU generated the token. Tokens of equivalent capability from different providers (OpenAI, Anthropic, Google, open-source models) are functionally interchangeable. This fungibility, while not as perfect as gold or crude oil, is sufficient to support standardized trading---analogously, crude oil from different origins (WTI, Brent, Dubai) varies in quality but this does not prevent oil futures markets from functioning.

\textbf{Standardized measurement.} The token as a measurement unit is already highly standardized. The industry convention of quoting in ``million tokens'' (M tokens) is universally adopted. Despite differences in tokenizers across models, the ``token'' as a measure of inference workload has achieved broad consensus, analogous to electricity measured in kilowatt-hours (kWh) or natural gas in million British thermal units (MMBtu).

\textbf{Large-scale trading.} Global AI API call volumes have reached the scale to support a commodity market. According to Stanford HAI \citep{hai2024}, the annual transaction volume of the global AI inference API market exceeded \$10 billion in 2024, growing at over 100\% annually---comparable to the carbon emission trading market's early stage \citep{ellerman2007}.

\subsection{The Dual Nature of Tokens: Raw Material and Finished Product}

Tokens possess a distinctive dual nature, relatively rare among traditional commodities.

\textbf{Raw material perspective.} From a factor of production viewpoint, the token is a compute resource---a raw material input for producing intelligent services. An AI SaaS company must ``consume'' tokens to generate answers for its customers. In this perspective, tokens resemble steel in manufacturing or ethylene in the chemical industry---an intermediate input whose cost directly affects downstream product pricing and margins.

\textbf{Finished product perspective.} From a consumer viewpoint, the token is the final output of an intelligent service---users purchase tokens (though typically not using this term) to receive AI answers, recommendations, or creative content. In this perspective, tokens resemble tap water or electricity---a service product directly facing end consumers.

The critical judgment is that as VLAs and other applications proliferate, the raw material attribute will gradually dominate the finished product attribute. When AI expands from ``chatbot'' to ``actor,'' token consumption will embed into larger production processes, becoming infrastructure input across manufacturing, logistics, healthcare, and other industries. This transition parallels electricity's historical evolution from a ``novel product'' in the late 19th century to ``infrastructure'' by mid-20th century \citep{buyya2009}.

\begin{figure}[htbp]
\centering
\begin{tikzpicture}[
    box/.style={draw, rounded corners=3pt, minimum width=2.8cm, minimum height=1cm, align=center, font=\small},
    >=Stealth
]
\node[box, fill=accentblue!15] (fp) at (-2.5, 3) {Finished Product\\(\footnotesize chatbot output)};
\node[box, fill=accentorange!15] (rm) at (2.5, 3) {Raw Material\\(\footnotesize compute input)};
\node[font=\small\bfseries, above=0.3cm of fp.north east, anchor=south] {Token Dual Nature};
\draw[->, thick, accentblue] (fp) -- node[above, font=\footnotesize]{transition} (rm);

\begin{scope}[yshift=-0.5cm]
\foreach \year/\fp/\rm/\xpos in {2023/0.8/0.2/0, 2025/0.55/0.45/1.5, 2027/0.3/0.7/3, 2030/0.15/0.85/4.5} {
    \fill[accentblue!40] (\xpos, 0) rectangle ++(0.5, \fp*2);
    \fill[accentorange!40] (\xpos+0.5, 0) rectangle ++(0.5, \rm*2);
    \node[below, font=\footnotesize] at (\xpos+0.5, 0) {\year};
}
\draw[->] (-0.3, 0) -- (5.8, 0) node[right, font=\footnotesize]{Year};
\draw[->] (-0.3, 0) -- (-0.3, 2.2) node[above, font=\footnotesize]{Share};
\fill[accentblue!40] (6, 1.6) rectangle ++(0.3, 0.3);
\node[right, font=\footnotesize] at (6.4, 1.75) {Product};
\fill[accentorange!40] (6, 1.0) rectangle ++(0.3, 0.3);
\node[right, font=\footnotesize] at (6.4, 1.15) {Raw Material};
\end{scope}
\end{tikzpicture}
\caption{Evolution of token dual attributes. As application scenarios expand from chatbots to embodied intelligence, the raw material attribute of tokens will gradually surpass the finished product attribute, mirroring electricity's transition from ``product'' to ``infrastructure.''}
\label{fig:dual-attribute}
\end{figure}
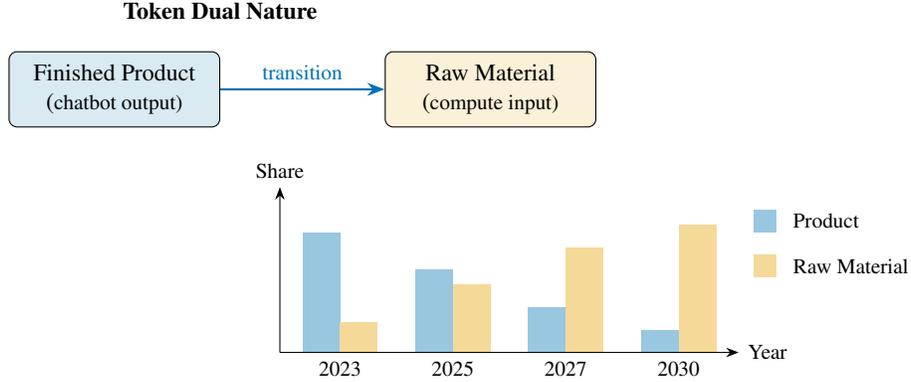

\subsection{Comparative Analysis with Analogous Commodities}

To more clearly understand tokens' commodity attributes, we systematically compare them with four analogous existing commodities (see Table~\ref{tab:commodity-comparison}).

\begin{table}[htbp]
\centering
\caption{Comparative analysis of token attributes against analogous commodities}
\label{tab:commodity-comparison}
\renewcommand{\arraystretch}{1.3}
\begin{tabularx}{\textwidth}{l|X|X|X|X|X}
\toprule
\textbf{Attribute} & \textbf{Electricity} & \textbf{Carbon Credits} & \textbf{Bandwidth} & \textbf{Cloud Compute} & \textbf{AI Token} \\
\midrule
Storability & Non-storable & Storable (quota) & Non-storable & Non-storable & Non-storable \\
Standardization & High (kWh) & High (tCO$_2$) & Medium (Mbps) & Medium (inst./hr) & High (M tokens) \\
Price volatility & Very high & High & Medium & Med--high & Currently low, expected high \\
Supply elasticity & Low (short-term) & Policy-driven & Medium & Medium & Low (short-term) \\
Demand elasticity & Low & Medium & Medium & Med--high & Varies by use case \\
Pricing mechanism & Market + regulation & Auction + secondary & Contract + congestion & On-demand + spot & Provider-set \\
Futures market & Mature & Mature & None & Nascent & Non-existent (proposed here) \\
Physical basis & Power plants & Artificial quota & Network infra. & Data centers & GPU clusters \\
\bottomrule
\end{tabularx}
\renewcommand{\arraystretch}{1.0}
\end{table}

Electricity is the closest analogue to tokens. Both share key attributes: non-storability (produced and consumed simultaneously), short-term supply rigidity, and time-varying demand characteristics \citep{bessembinder2002, lucia2002}. The successful operation of electricity futures markets demonstrates that even for non-storable commodities, futures contracts can effectively serve price discovery and risk management functions.

Carbon emission allowances provide another valuable reference. As an ``artificial commodity'' created entirely by regulatory policy, carbon credits' market development history shows how a new commodity can build a complete futures trading system from scratch \citep{ellerman2007, hintermann2010}.

Cloud compute (e.g., AWS Spot Instances) is the direct predecessor of tokens. \citet{agmon2013}'s study of Amazon EC2 spot instance pricing reveals the auction characteristics and price dynamics of cloud compute markets. Tokens can be viewed as a further standardization and granularization of cloud compute---from ``virtual machine instance-hours'' to ``million tokens.''

\subsection{Three-Factor Token Supply Model}

Token supply capacity is determined by three interacting factors, which we term the ``three-factor supply model.''

\begin{figure}[htbp]
\centering
\begin{tikzpicture}[
    factor/.style={draw, rounded corners=3pt, minimum width=2.5cm, minimum height=0.9cm, align=center, font=\small, thick},
    >=Stealth
]
\node[factor, fill=accentred!15] (ce) at (-4, 2.5) {Energy Cost\\$C_E$ (\$/kWh)};
\node[factor, fill=accentblue!15] (eh) at (0, 2.5) {Hardware Eff.\\$\eta_H$ (FLOPS/\$)};
\node[factor, fill=accentgreen!15] (ea) at (4, 2.5) {Algorithm Eff.\\$\eta_A$ (Tok/FLOP)};

\node[draw, circle, minimum size=2cm, fill=accentorange!10, thick, align=center, font=\small\bfseries] (ts) at (0, 0) {Token\\Supply $Q$};

\draw[->, thick, accentred] (ce) -- (ts) node[midway, left, font=\footnotesize, xshift=-2pt]{$\div$};
\draw[->, thick, accentblue] (eh) -- (ts) node[midway, right, font=\footnotesize, xshift=2pt]{$\times$};
\draw[->, thick, accentgreen] (ea) -- (ts) node[midway, right, font=\footnotesize, xshift=2pt]{$\times$};

\node[below=0.6cm of ts, font=\small] {$Q = \dfrac{\eta_H \cdot \eta_A}{C_E} \cdot K$};
\end{tikzpicture}
\caption{Three-factor token supply model. Token supply capacity is jointly determined by energy cost, hardware efficiency, and algorithm efficiency, forming a multiplicative relationship.}
\label{fig:supply-triad}
\end{figure}
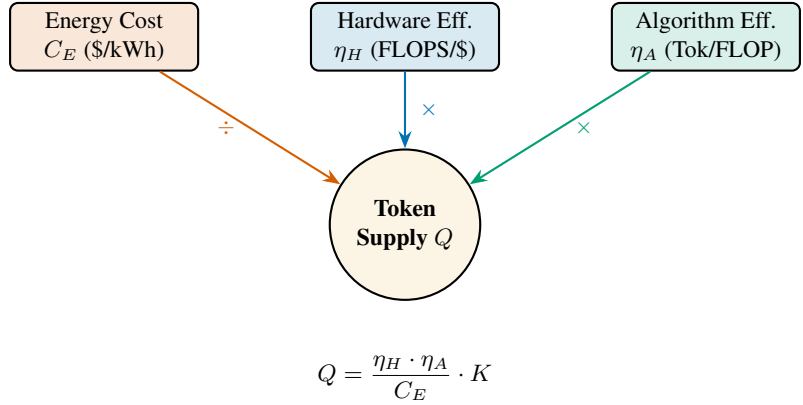

\begin{definition}[Token Supply Function]
Given total investment scale $K$, token supply capacity $Q_{\text{Token}}$ can be expressed as:
\begin{equation}
    Q_{\text{Token}} = \frac{\eta_H \cdot \eta_A}{C_E} \cdot K
    \label{eq:supply-function}
\end{equation}
where $C_E$ is unit energy cost (\$/kWh), $\eta_H$ is hardware efficiency (FLOPS/\$), and $\eta_A$ is algorithm efficiency (Tokens/FLOP).
\end{definition}

The three-factor model reveals structural features of token supply. \textbf{Energy cost} is constrained by electricity market structure and geography, changing slowly year-over-year but exhibiting seasonal fluctuations \citep{joskow2001}. \citet{patterson2021} estimate that power consumption for large-scale AI model training has reached the level of a small city.

\textbf{Hardware efficiency} follows a generalized extension of Moore's Law but is constrained by physical limits of chip manufacturing and industry concentration. NVIDIA holds over 80\% of the high-end AI chip market, with product iteration cycles (approximately 18--24 months) directly determining the growth rate of $\eta_H$ \citep{epochai2023}.

\textbf{Algorithm efficiency} is the fastest-growing but most unpredictable factor among the three. \citet{kaplan2020} and \citet{hoffmann2022}'s scaling law research shows power-law relationships between model performance and computation, whose coefficients can be improved through algorithmic innovations (attention mechanism optimization, quantization, sparsification, knowledge distillation).

The multiplicative relationship means that token supply growth is the sum of the three factors' growth rates (in log terms). When all three improve simultaneously, supply capacity grows very rapidly---this is the fundamental reason for the past three years' token price collapse.

\section{Token Market Supply-Demand Structure and Price Dynamics}\label{sec:supply-demand}

\subsection{Supply Side: Model Providers' Cost Structure}

Understanding token pricing requires first analyzing model providers' cost structure. Total token cost can be decomposed into two components: amortized training cost and marginal inference cost.

\begin{equation}
    C_{\text{total}} = \frac{C_{\text{train}}}{N_{\text{lifetime}}} + C_{\text{marginal}}
    \label{eq:cost-structure}
\end{equation}

where $C_{\text{train}}$ is total model training cost, $N_{\text{lifetime}}$ is expected total tokens served over the model's lifetime, and $C_{\text{marginal}}$ is the marginal inference cost per token.

For successful commercial models, training cost becomes marginal in the long run---what truly determines token pricing is the marginal inference cost, directly determined by the three-factor model:

\begin{equation}
    C_{\text{marginal}} = \frac{C_E}{\eta_H \cdot \eta_A}
    \label{eq:marginal-cost}
\end{equation}

\subsection{Demand Side: From Developers to Enterprise Applications}

Token demand is undergoing a qualitative transformation from experimental to production-grade deployment. The current demand structure can be divided into four tiers:

\textbf{Tier 1: Developer experimentation} ($\sim$15\%, declining). Independent developers and small teams using AI APIs for prototyping---highly price-elastic demand.

\textbf{Tier 2: Consumer chat applications} ($\sim$25\%). ChatGPT, Claude, and other consumer-facing conversational services---medium price elasticity.

\textbf{Tier 3: Enterprise SaaS integration} ($\sim$40\%, rapidly growing). Various enterprise software integrating AI inference capabilities---relatively low price elasticity, as AI becomes core to the product value proposition.

\textbf{Tier 4: VLA and autonomous systems} ($\sim$5\%, expected to explode). Autonomous driving, industrial robotics, medical diagnosis requiring continuous real-time inference \citep{brohan2023, driess2023}---extremely low price elasticity.

The tiered elasticity structure has important implications for price dynamics. As the share of low-elasticity demand (Tiers 3 and 4) increases, overall token market demand elasticity will decrease, meaning supply shocks will produce larger price swings---consistent with electricity market characteristics \citep{bessembinder2002}.

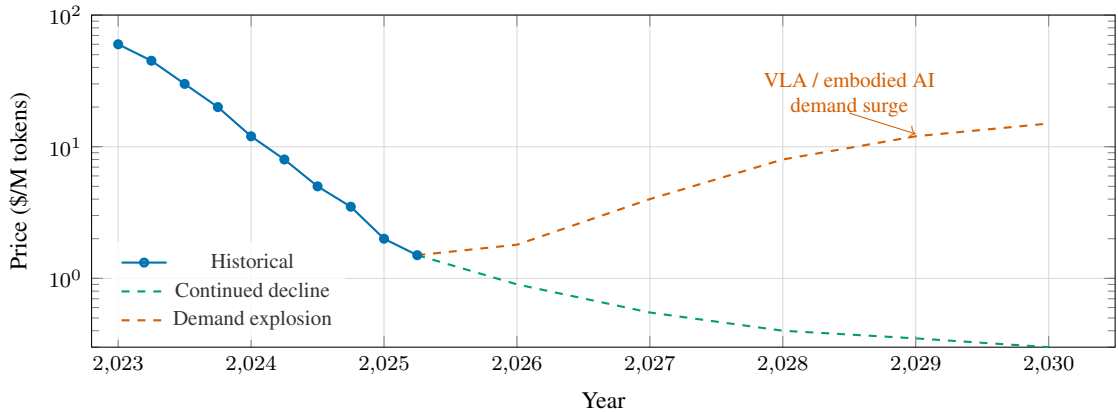
\begin{figure}[htbp]
\centering
\begin{tikzpicture}
\begin{semilogyaxis}[
    width=0.92\textwidth,
    height=6cm,
    xlabel={\small Year},
    ylabel={\small Price (\$/M tokens)},
    xmin=2022.8, xmax=2030.5,
    ymin=0.3, ymax=100,
    xtick={2023,2024,2025,2026,2027,2028,2029,2030},
    xticklabel style={font=\footnotesize},
    yticklabel style={font=\footnotesize},
    grid=major,
    grid style={gray!30},
    legend style={at={(0.02,0.02)}, anchor=south west, font=\footnotesize, draw=none, fill=white, fill opacity=0.8},
]
\addplot[thick, accentblue, mark=*, mark size=1.5pt] coordinates {
    (2023.0, 60) (2023.25, 45) (2023.5, 30) (2023.75, 20)
    (2024.0, 12) (2024.25, 8) (2024.5, 5) (2024.75, 3.5)
    (2025.0, 2.0) (2025.25, 1.5)
};
\addlegendentry{Historical}

\addplot[thick, dashed, accentgreen] coordinates {
    (2025.25, 1.5) (2026, 0.9) (2027, 0.55) (2028, 0.4) (2029, 0.35) (2030, 0.3)
};
\addlegendentry{Continued decline}

\addplot[thick, dashed, accentred] coordinates {
    (2025.25, 1.5) (2026, 1.8) (2027, 4.0) (2028, 8.0) (2029, 12) (2030, 15)
};
\addlegendentry{Demand explosion}

\node[font=\footnotesize, accentred, align=center] at (axis cs:2028.5, 25) {VLA / embodied AI\\demand surge};
\draw[->, thin, accentred] (axis cs:2028.5, 18) -- (axis cs:2029, 12.5);
\end{semilogyaxis}
\end{tikzpicture}
\caption{GPT-4-level inference token price trend (2023--2025) and future projections. Historical data shows continuously rapid price decline, but application-layer explosion may cause price reversal. Dashed lines show two projection scenarios.}
\label{fig:price-history}
\end{figure}

\subsection{Price Dynamics and Supply-Demand Mismatch}

Token market price dynamics can be divided into three phases.

\textbf{Phase 1: Supply-driven price decline (2023--2025).} The current phase. Simultaneous improvement in all three factors plus intense competition drives exponential token price decline.

\textbf{Phase 2: Supply-demand rebalancing (est.\ 2025--2027).} Application-layer deployment scales, token demand grows rapidly. Data center construction, energy supply, and chip capacity expansion cannot keep pace. Price decline slows, with intermittent rebounds---analogous to ``capacity scarcity'' in electricity markets \citep{wilson2002}.

\textbf{Phase 3: Demand-driven volatility (est.\ post-2027).} VLA and embodied AI commercialization at scale drives explosive token demand growth. With short-term supply elasticity extremely low (new data centers require 18--36 months), supply-demand mismatches will produce significant price volatility. Token prices will no longer decline monotonically but exhibit peak-valley patterns similar to electricity markets \citep{borenstein2002}.

The core supply-demand mismatch lies in the asymmetric timescales of demand growth and supply expansion. Token demand growth can be instantaneous---a killer app launch can increase API call volume 10-fold within days. Supply expansion is constrained by the physical world: GPU production depends on TSMC wafer capacity ($\sim$24 months to expand), data center construction requires 18--36 months, and power infrastructure expansion takes years.

\subsection{Information Asymmetry and Market Failure}

The current token market exhibits severe information asymmetry, which both exacerbates price volatility risk and provides additional justification for establishing a futures market.

\textbf{Pricing opacity.} Model providers' token pricing is typically far below actual marginal cost---strategic subsidization to rapidly acquire market share. This makes the ``true'' price artificially suppressed, and demand-side participants cannot judge whether current low prices are sustainable \citep{shapiro1998}.

\textbf{Price dispersion.} Different providers offering equivalent capability tokens may have price differences exceeding 10-fold, reflecting brand premiums, service quality differences, and search costs.

\textbf{Asymmetric supply information.} Model providers possess complete information about their capacity utilization, expansion plans, and cost structures, while demand-side participants know very little \citep{borenstein2002}.

These market failures provide ample justification for establishing a token futures market---a core function of futures markets is reducing information asymmetry through centralized price discovery mechanisms \citep{harris2003}.

\section{Electricity Futures Analogy and Theoretical Framework}\label{sec:theory}

\subsection{History of Electricity Futures Markets}

The development of electricity futures markets provides the most direct historical reference for token futures. Electricity and tokens share a key attribute rare among commodities: \textbf{non-storability}. Electricity must be consumed at the instant it is produced. Tokens are identical---inference tokens are ``consumed'' the moment they are generated, with no concept of ``token inventory.''

Electricity futures market development proceeded in three stages: market liberalization (1990s, with Nord Pool in 1993, followed by PJM and ERCOT in the US) \citep{wilson2002, hogan1992}; introduction of futures contracts allowing hedging of price risk \citep{lucia2002}; and development of a derivatives ecosystem including options, swaps, and contracts for difference.

\citet{bessembinder2002} established an equilibrium pricing model for electricity futures, demonstrating that the deviation of futures from spot prices can be explained by demand variance and supply-demand skewness. This framework applies directly to token futures---when token demand variance increases and the supply-demand curve is positively skewed, token futures will exhibit a positive risk premium.

\citet{longstaff2004}'s high-frequency empirical study revealed key characteristics of electricity spot prices: extreme peak prices (exceeding 100$\times$ normal levels), rapid mean reversion, and significant seasonal patterns---features likely to appear in mature token markets.

\subsection{Commodity Financialization Theory}

The evolution from pure physical trading to financialized commodity follows a relatively fixed path: spot trading standardization $\to$ forward contracts $\to$ futures listing $\to$ options and complex derivatives $\to$ index products \citep{cheng2014}.

\citet{tang2012} show that commodity financialization has dual effects: improving market liquidity and price discovery efficiency, while potentially increasing correlation with macro-financial factors. \citet{basak2016} prove that index investors' participation changes commodity price dynamics---increasing volatility, altering the term structure, and creating contagion across commodities. For token futures, this implies careful design of market access and position limits to prevent excessive financialization.

\subsection{Conditions for Successful New Futures Markets}

\citet{black1986} proposed five necessary conditions for successful futures contract listing. We evaluate each for tokens:

\begin{table}[htbp]
\centering
\caption{Black (1986) futures success conditions applied to token markets}
\label{tab:black-conditions}
\renewcommand{\arraystretch}{1.4}
\begin{tabularx}{\textwidth}{c|X|X|c}
\toprule
\textbf{Cond.} & \textbf{Original Requirement} & \textbf{Token Market Status} & \textbf{Fulfillment} \\
\midrule
1 & Sufficient price volatility & Currently declining; expected to increase significantly & Partial \\
2 & Sufficiently large spot market & Annual volume $>$\$10B, rapidly growing & Met \\
3 & Standardizable underlying & Token measurement highly standardized & Met \\
4 & Sufficient hedging demand & Application-layer enterprises face compute cost risk & Potential \\
5 & No substitute risk management tools & No alternative hedging instruments exist & Met \\
\bottomrule
\end{tabularx}
\renewcommand{\arraystretch}{1.0}
\end{table}

Condition 1 is currently the least fulfilled---token prices primarily exhibit one-directional decline. However, as analyzed in Section~\ref{sec:supply-demand}, this is expected to change once the application layer explodes. \citet{silber1981} further notes that futures contract success also depends on contract design quality---specification reasonableness, settlement mechanism convenience, and market-maker regime effectiveness.

\subsection{Two-Sided Market Theory}

The token market is essentially a two-sided platform market---model providers (supply) and application developers (demand) interact through API platforms. \citet{rochet2003, rochet2006}'s two-sided market theory provides an important lens for understanding token pricing.

In two-sided markets, optimal pricing is not simply splitting costs between sides, but differentiating based on each side's demand elasticity and network externality strength. Current providers' ``low-price customer acquisition'' strategy reflects two-sided market pricing logic. \citet{armstrong2006}'s competitive model shows that multi-platform competition leads to more subsidies for the higher-elasticity side, explaining why token prices are pushed far below marginal cost during competition. However, this equilibrium is unstable---once market share stabilizes, subsidies will gradually withdraw.

Token futures will change these dynamics by providing a public, market-based forward price signal, reducing strategic pricing distortions.

\section{Token Futures Contract Design}\label{sec:contract}

\subsection{Contract Standardization}

The core challenge in token futures contract design is defining the underlying---different models' tokens vary in quality (performance). How to define a standardized ``contract underlying'' is the primary design question.

\begin{definition}[Standard Inference Token (SIT)]
The Standard Inference Token (SIT) is defined as: one inference token produced by a model achieving specified performance thresholds on a standardized benchmark suite. The SIT performance benchmark is anchored to GPT-4-Turbo's performance as of January 2024 on mainstream benchmarks (MMLU $\geq$ 86\%, HumanEval $\geq$ 67\%, GSM8K $\geq$ 92\%).
\end{definition}

The SIT design logic resembles the ``API gravity'' and ``sulfur content'' standards in crude oil futures---by setting quality benchmarks, tokens from different sources can be traded under a unified standard.

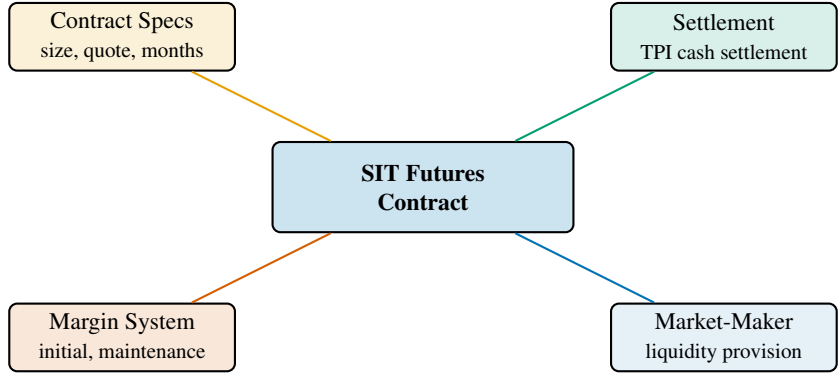
\begin{figure}[htbp]
\centering
\begin{tikzpicture}[
    component/.style={draw, rounded corners=3pt, minimum width=3cm, minimum height=0.9cm, align=center, font=\small, thick},
    >=Stealth
]
\node[component, fill=accentblue!20, minimum width=4cm, minimum height=1.2cm, font=\small\bfseries] (center) at (0, 0) {SIT Futures\\Contract};

\node[component, fill=accentorange!15] (specs) at (-4, 2) {Contract Specs\\{\footnotesize size, quote, months}};
\node[component, fill=accentgreen!15] (settle) at (4, 2) {Settlement\\{\footnotesize TPI cash settlement}};
\node[component, fill=accentred!15] (margin) at (-4, -2) {Margin System\\{\footnotesize initial, maintenance}};
\node[component, fill=accentblue!10] (mm) at (4, -2) {Market-Maker\\{\footnotesize liquidity provision}};

\draw[thick, accentorange] (specs) -- (center);
\draw[thick, accentgreen] (settle) -- (center);
\draw[thick, accentred] (margin) -- (center);
\draw[thick, accentblue] (mm) -- (center);
\end{tikzpicture}
\caption{Token futures contract design framework. The contract comprises four dimensions: contract specifications, settlement mechanism, margin system, and market-maker regime.}
\label{fig:contract-structure}
\end{figure}

Complete contract specifications:

\begin{itemize}[leftmargin=2em]
    \item \textbf{Underlying:} Standard Inference Token (SIT)
    \item \textbf{Contract size:} 1 million SIT per lot (1 lot = 1M SIT)
    \item \textbf{Quote convention:} USD per million SIT (\$/M SIT)
    \item \textbf{Minimum price increment:} \$0.01/M SIT (i.e., \$0.01 per lot)
    \item \textbf{Contract months:} 6 consecutive monthly + 4 nearest quarterly contracts
    \item \textbf{Trading hours:} Monday--Friday, 24-hour continuous trading
    \item \textbf{Last trading day:} Third Wednesday of the delivery month
    \item \textbf{Settlement:} Cash settlement (against Token Price Index, TPI)
\end{itemize}

\subsection{Settlement Mechanism Design}

Since tokens are non-storable, traditional physical delivery is infeasible. We adopt \textbf{cash settlement} based on the Token Price Index (TPI).

\begin{definition}[Token Price Index (TPI)]
The Token Price Index is defined as the multi-provider volume-weighted average token price:
\begin{equation}
    \text{TPI}_t = \sum_{i=1}^{N} w_i \cdot P_{i,t}
    \label{eq:tpi}
\end{equation}
where $P_{i,t}$ is provider $i$'s SIT-equivalent price at time $t$, $w_i$ is the provider's weight, and $N$ is the number of qualified providers.
\end{definition}

Weights $w_i$ are determined by volume-weighting:
\begin{equation}
    w_i = \frac{V_i}{\sum_{j=1}^{N} V_j}
    \label{eq:weight}
\end{equation}
with a single-provider weight cap of 30\% to prevent undue influence. Each provider's SIT-equivalent price is adjusted for model capability:
\begin{equation}
    P_{i,t} = P_{i,t}^{\text{raw}} \cdot \frac{S_{\text{SIT}}}{S_i}
    \label{eq:equiv-price}
\end{equation}

\subsection{Margin and Risk Control}

\textbf{Initial margin} is set at 8\%--12\% of contract value, dynamically adjusted based on historical volatility:
\begin{equation}
    M_{\text{init}} = \max\left(\alpha \cdot \sigma_{20} \cdot \sqrt{T} \cdot V_{\text{contract}}, \quad M_{\text{floor}}\right)
    \label{eq:init-margin}
\end{equation}
where $\sigma_{20}$ is the 20-day annualized volatility, $T$ is the holding period, $V_{\text{contract}}$ is contract notional, and $\alpha$ is the coverage coefficient (typically 3, corresponding to 99.7\% confidence).

\textbf{Maintenance margin} is set at 75\% of initial margin. \textbf{Mark-to-market} is performed daily. \textbf{Price limits} are set at $\pm$15\% (first tier, triggering 10-minute trading halt) and $\pm$25\% (second tier, halting trading until next session).

\subsection{Market-Maker Regime}

Market makers are essential for liquidity in nascent futures markets \citep{harris2003}. Designated market makers must: (1) maintain minimum net capital of \$50 million; (2) continuously provide two-sided quotes for at least 80\% of trading hours; (3) maintain bid-ask spreads within 2\% (front month) to 5\% (back months) of mid-price; and (4) maintain minimum quote size of 50 lots (50M SIT).

\citet{kyle1985} and \citet{glosten1985}'s microstructure theory indicates that market makers face adverse selection risk from informed traders. In token futures, model providers may possess private information about capacity changes and cost structures, requiring market makers to widen spreads to compensate for this information asymmetry.

\section{Hedging Strategies and Market Participant Analysis}\label{sec:hedging}

\subsection{Market Participant Classification}

Token futures market participants can be classified into three groups with distinct trading motives.

\begin{figure}[htbp]
\centering
\begin{tikzpicture}[
    group/.style={draw, rounded corners=3pt, minimum width=2.8cm, minimum height=1.6cm, align=center, font=\small, thick},
    >=Stealth
]
\node[draw, circle, minimum size=2.2cm, fill=accentblue!15, thick, align=center, font=\small\bfseries] (center) at (0, 0) {Token\\Futures};

\node[group, fill=accentgreen!15] (hedgers) at (-4.5, 0) {Hedgers\\{\footnotesize AI SaaS cos.,}\\{\footnotesize model providers}};
\node[group, fill=accentorange!15] (specs) at (4.5, 0) {Speculators\\{\footnotesize quant funds,}\\{\footnotesize macro funds}};
\node[group, fill=accentred!15] (arbs) at (0, -3) {Arbitrageurs\\{\footnotesize cross-platform,}\\{\footnotesize cash-futures}};

\draw[->, thick, accentgreen] (hedgers) -- (center) node[midway, above, font=\footnotesize]{risk transfer};
\draw[->, thick, accentorange] (specs) -- (center) node[midway, above, font=\footnotesize]{liquidity};
\draw[->, thick, accentred] (arbs) -- (center) node[midway, right, font=\footnotesize, xshift=2pt]{efficiency};
\end{tikzpicture}
\caption{Token futures market participant structure. Hedgers transfer risk, speculators assume risk and provide liquidity, and arbitrageurs ensure price consistency.}
\label{fig:participants}
\end{figure}
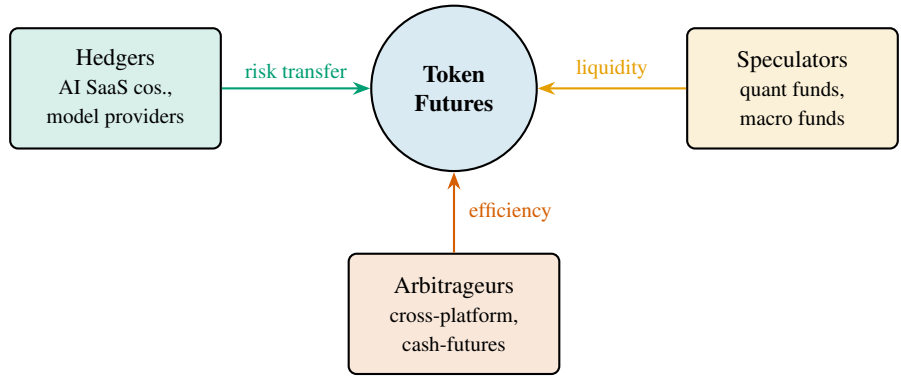

\textbf{Hedgers} are the fundamental raison d'\^{e}tre of the token futures market. Buy-side hedgers are primarily application-layer AI companies facing token price increase risk. Sell-side hedgers are primarily model providers facing token price decrease risk.

\textbf{Speculators} assume the risk transferred by hedgers and provide market liquidity. Quantitative funds can exploit statistical features (mean reversion, jumps, seasonality) \citep{denboer2015}; macro hedge funds can incorporate token futures into broader ``AI cycle'' investment themes.

\textbf{Arbitrageurs} discover and exploit price discrepancies to promote market efficiency through cross-platform, cash-futures, and inter-temporal arbitrage \citep{harris2003}.

\subsection{Optimal Hedge Ratio}

\citet{johnson1960} and \citet{ederington1979} established the classical optimal hedge ratio framework.

\begin{proposition}[Minimum Variance Hedge Ratio]
The optimal hedge ratio $h^*$ minimizing hedged portfolio variance is:
\begin{equation}
    h^* = \rho_{SF} \cdot \frac{\sigma_S}{\sigma_F}
    \label{eq:hedge-ratio}
\end{equation}
where $\rho_{SF}$ is the correlation between spot and futures price changes, and $\sigma_S$, $\sigma_F$ are their respective standard deviations.
\end{proposition}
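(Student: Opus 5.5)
The plan is to set up the hedged position explicitly and minimize its variance as a one-variable quadratic in $h$. Let $\Delta S$ and $\Delta F$ be the spot and futures price changes over the hedging horizon, with finite, nonzero variances $\sigma_S^2$, $\sigma_F^2$ and covariance $\sigma_{SF}=\rho_{SF}\sigma_S\sigma_F$. A buy-side hedger, such as an application-layer enterprise facing token cost risk, holds one unit of spot exposure and $h$ futures contracts per unit. Up to the sign convention for long versus short hedges, the hedged position changes by $\Delta H = \Delta S - h\,\Delta F$. The sign convention does not affect the minimizing magnitude; it only determines the direction of the futures position.

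By bilinearity of variance,
\begin{equation*}
\operatorname{Var}(\Delta H) = \sigma_S^2 - 2h\,\sigma_{SF} + h^2\sigma_F^2 .
\end{equation*}
This is a quadratic in $h$ whose leading coefficient $\sigma_F^2$ is strictly positive, so it is strictly convex and has a unique minimizer. Setting the derivative $-2\sigma_{SF} + 2h\sigma_F^2$ to zero gives $h^* = \sigma_{SF}/\sigma_F^2$. Substituting $\sigma_{SF}=\rho_{SF}\sigma_S\sigma_F$ yields $h^* = \rho_{SF}\sigma_S/\sigma_F$, which is \eqref{eq:hedge-ratio}. The second derivative $2\sigma_F^2>0$ confirms this point is the global minimum.

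As a byproduct worth recording for the later simulation section, the minimized variance is $\sigma_S^2(1-\rho_{SF}^2)$. The variance reduction is therefore $\rho_{SF}^2$, which is the Ederington hedging-effectiveness measure. This is presumably what the 62\%--78\% volatility-reduction figures in the abstract are computed against.

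The main obstacle is not computational but one of modelling hypotheses. The statement implicitly requires a single-period (static) hedge, finite second moments, and $\sigma_F>0$. Under the paper's mean-reverting jump-diffusion, finiteness of second moments must be checked: it holds if the jump sizes have finite variance. I would state these assumptions explicitly. I would also note that the quantities are taken as moments of price \emph{changes} over the hedge horizon. This keeps the result valid for a non-storable underlying settled in cash against the TPI, with basis risk captured entirely through $\rho_{SF}<1$.
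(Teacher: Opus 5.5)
Your proof is correct and follows the paper's argument: the paper writes $\Delta V = Q_S(\Delta S - h\,\Delta F)$, expands the variance as $Q_S^2(\sigma_S^2 - 2h\rho_{SF}\sigma_S\sigma_F + h^2\sigma_F^2)$ and sets the derivative in $h$ to zero, while you normalize to one spot unit and also check strict convexity via $\sigma_F^2>0$, which the paper omits. Two small points concern only your asides, not the proof: since $P_t = e^{X_t}$, finite second moments of price changes need an exponential moment such as $E[e^{2J}]<\infty$, not merely finite jump variance (this does hold for the paper's Gaussian $J$); and $\rho_{SF}^2$ is a variance reduction, whereas the paper states its 62\%--78\% figures in standard-deviation terms, so those figures should not be read as $\rho_{SF}^2$ directly.
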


\begin{proof}
Let the enterprise's spot position be $Q_S$ units with futures hedge position $Q_F$ units. Defining $h = Q_F / Q_S$, the hedged portfolio value change is $\Delta V = Q_S (\Delta S - h \cdot \Delta F)$, with variance:
\begin{equation}
    \text{Var}(\Delta V) = Q_S^2 \left(\sigma_S^2 - 2h\rho_{SF}\sigma_S\sigma_F + h^2\sigma_F^2\right)
\end{equation}
Setting $\partial \text{Var}(\Delta V)/\partial h = 0$ yields $h^* = \rho_{SF} \cdot \sigma_S / \sigma_F$.
\end{proof}

Hedge efficiency $E$ is defined as the proportional variance reduction:
\begin{equation}
    E = 1 - \frac{\text{Var}(\Delta V^{\text{hedged}})}{\text{Var}(\Delta V^{\text{unhedged}})} = \rho_{SF}^2
    \label{eq:hedge-efficiency}
\end{equation}

When $\rho_{SF} = 0.85$, hedge efficiency is 72.25\%---token futures eliminate approximately 72\% of cost volatility risk.

\section{GPU Futures Feasibility Analysis}\label{sec:gpu}

\subsection{GPU Market Financialization Prospects}

NVIDIA's monopolistic position in the high-end AI GPU market makes its products' supply-demand dynamics similar to OPEC's influence on oil markets. H100 GPU prices surged from $\sim$\$25,000 to over \$40,000 during the 2023 shortage, spawning speculative hoarding \citep{singleton2014}.

From a commodity financialization perspective \citep{tang2012, cheng2014}, the GPU market has some prerequisites: sufficient scale ($>$\$50B annual sales), significant price volatility, and clear hedging demand.

\subsection{Barriers to GPU Futures}

Despite some financialization prerequisites, physical GPU futures face fundamental obstacles: \textbf{rapid iteration cycles} (18--24 months), making a 12-month contract's underlying potentially obsolete at delivery; \textbf{standardization difficulty} across multi-dimensional performance axes; and \textbf{excessive supply concentration} with NVIDIA's $>$80\% market share, meaning prices are largely determined by NVIDIA's pricing decisions rather than market forces.

\subsection{GPU Compute Futures: From Physical to Service}

A more feasible path is futures based on \textbf{GPU compute time} rather than physical GPUs.

\begin{definition}[Standard Compute Unit (SCU)]
The Standard Compute Unit is defined as one hour of compute from a standard benchmark GPU (H100-80GB-SXM as initial benchmark). Different GPU models are converted at their equivalent compute relative to the benchmark.
\end{definition}

Token futures and GPU compute futures form an upstream-downstream relationship. Token futures reflect ``per-unit intelligent service'' pricing; GPU compute futures reflect ``per-unit compute resource'' pricing. The spread between them reflects the \textbf{algorithm efficiency premium}.

\section{Monte Carlo Simulation of the Token Futures Market}\label{sec:simulation}

\subsection{Model Specification}

To evaluate hedging efficiency and market dynamics, we construct a token price stochastic process model. Token price dynamics---long-term mean reversion (driven by three-factor cost trends) plus short-term jumps (demand shocks or supply disruptions)---are best characterized by a \textbf{mean-reverting jump-diffusion process} \citep{lucia2002}.

\begin{definition}[Token Price Stochastic Process]
Token price $P_t$'s log $X_t = \ln P_t$ follows the stochastic differential equation:
\begin{equation}
    dX_t = \kappa(\theta_t - X_t)\,dt + \sigma\,dW_t + J\,dN_t
    \label{eq:sde}
\end{equation}
where $\kappa > 0$ is mean-reversion speed, $\theta_t$ is the time-varying long-term mean, $\sigma$ is diffusion volatility, $W_t$ is standard Brownian motion, $N_t$ is a Poisson process with intensity $\lambda$, and $J \sim \mathcal{N}(\mu_J, \sigma_J^2)$ is jump magnitude.
\end{definition}

The time-varying long-term mean $\theta_t$ captures the trend and seasonality:
\begin{equation}
    \theta_t = \theta_0 + \beta t + \gamma \sin\left(\frac{2\pi t}{T_{\text{season}}}\right)
    \label{eq:theta}
\end{equation}
where $\beta < 0$ reflects technology-driven long-term price decline, and $\gamma$ and $T_{\text{season}}$ capture seasonal demand fluctuations.

\begin{table}[htbp]
\centering
\caption{Monte Carlo simulation model parameters}
\label{tab:params}
\renewcommand{\arraystretch}{1.3}
\begin{tabular}{llll}
\toprule
\textbf{Parameter} & \textbf{Symbol} & \textbf{Calibrated Value} & \textbf{Interpretation} \\
\midrule
Mean-reversion speed & $\kappa$ & 2.5 & Fast reversion, $\sim$2.8-month half-life \\
Initial long-term mean & $\theta_0$ & $\ln(2.0)$ & Initial log price level \\
Trend coefficient & $\beta$ & $-0.35$ & $\sim$30\% annualized trend decline \\
Diffusion volatility & $\sigma$ & 0.40 & 40\% annualized continuous volatility \\
Jump arrival rate & $\lambda$ & 3.0/year & Average 3 significant jumps per year \\
Jump mean & $\mu_J$ & 0.10 & Upward-biased jumps (demand shocks) \\
Jump std.\ dev. & $\sigma_J$ & 0.25 & Jump magnitude uncertainty \\
Seasonal amplitude & $\gamma$ & 0.08 & $\sim$8\% seasonal variation \\
Seasonal period & $T_{\text{season}}$ & 1.0 year & Annual cycle \\
\bottomrule
\end{tabular}
\renewcommand{\arraystretch}{1.0}
\end{table}

For futures pricing, we adopt a no-arbitrage framework under risk-neutral measure $\mathbb{Q}$:
\begin{equation}
    F(t, T) = E^{\mathbb{Q}}\left[P_T \mid \mathcal{F}_t\right] = \exp\left(e^{-\kappa(T-t)}X_t + A(t,T)\right)
    \label{eq:futures-price}
\end{equation}
where $A(t,T)$ incorporates the long-term mean, trend, variance, and jump contributions.

\subsection{Simulation Results}

We conduct 10,000-path Monte Carlo simulations over a 3-year horizon (2026--2028) under three scenarios: baseline (moderate demand growth), optimistic (VLA explosion), and pessimistic (accelerated tech progress).

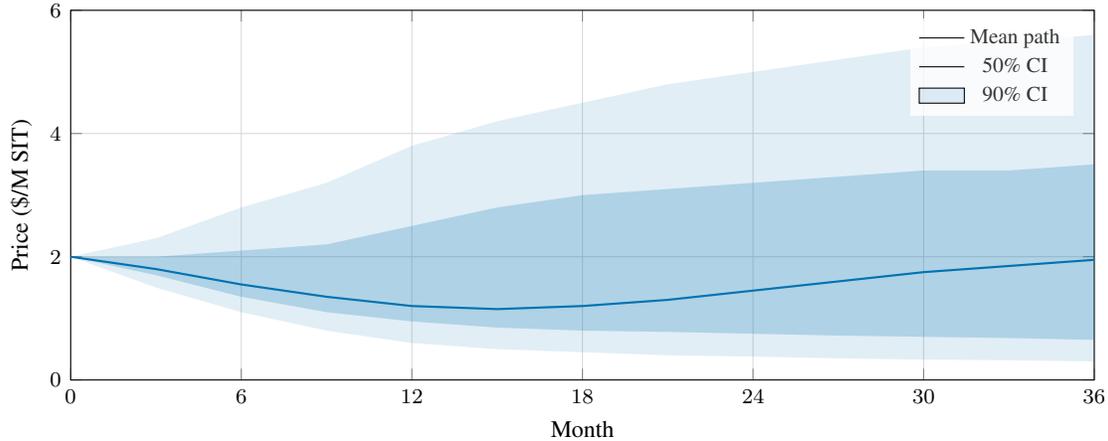
\begin{figure}[htbp]
\centering
\begin{tikzpicture}
\begin{axis}[
    width=0.92\textwidth,
    height=6.5cm,
    xlabel={\small Month},
    ylabel={\small Price (\$/M SIT)},
    xmin=0, xmax=36,
    ymin=0, ymax=6,
    xtick={0,6,12,18,24,30,36},
    xticklabel style={font=\footnotesize},
    yticklabel style={font=\footnotesize},
    grid=major,
    grid style={gray!30},
    legend style={at={(0.98,0.98)}, anchor=north east, font=\footnotesize, draw=none, fill=white, fill opacity=0.8},
]
\addplot[name path=upper90, draw=none] coordinates {
    (0,2.0) (3,2.3) (6,2.8) (9,3.2) (12,3.8) (15,4.2) (18,4.5) (21,4.8) (24,5.0) (27,5.2) (30,5.4) (33,5.5) (36,5.6)
};
\addplot[name path=lower90, draw=none] coordinates {
    (0,2.0) (3,1.5) (6,1.1) (9,0.8) (12,0.6) (15,0.5) (18,0.45) (21,0.4) (24,0.38) (27,0.35) (30,0.33) (33,0.32) (36,0.3)
};
\addplot[fill=accentblue, fill opacity=0.12] fill between[of=upper90 and lower90];

\addplot[name path=upper50, draw=none] coordinates {
    (0,2.0) (3,2.0) (6,2.1) (9,2.2) (12,2.5) (15,2.8) (18,3.0) (21,3.1) (24,3.2) (27,3.3) (30,3.4) (33,3.4) (36,3.5)
};
\addplot[name path=lower50, draw=none] coordinates {
    (0,2.0) (3,1.7) (6,1.35) (9,1.1) (12,0.95) (15,0.85) (18,0.8) (21,0.78) (24,0.75) (27,0.72) (30,0.7) (33,0.68) (36,0.65)
};
\addplot[fill=accentblue, fill opacity=0.22] fill between[of=upper50 and lower50];

\addplot[thick, accentblue] coordinates {
    (0,2.0) (3,1.8) (6,1.55) (9,1.35) (12,1.2) (15,1.15) (18,1.2) (21,1.3) (24,1.45) (27,1.6) (30,1.75) (33,1.85) (36,1.95)
};
\addlegendentry{Mean path}

\draw[decorate, decoration={brace, amplitude=4pt, mirror}, thick, accentgreen] (axis cs:0,-0.3) -- (axis cs:12,-0.3)
    node[midway, below=5pt, font=\footnotesize, accentgreen]{Phase 1: Supply-driven};
\draw[decorate, decoration={brace, amplitude=4pt, mirror}, thick, accentorange] (axis cs:12,-0.3) -- (axis cs:24,-0.3)
    node[midway, below=5pt, font=\footnotesize, accentorange]{Phase 2: Rebalancing};
\draw[decorate, decoration={brace, amplitude=4pt, mirror}, thick, accentred] (axis cs:24,-0.3) -- (axis cs:36,-0.3)
    node[midway, below=5pt, font=\footnotesize, accentred]{Phase 3: Demand-driven};

\addlegendimage{area legend, fill=accentblue, fill opacity=0.22}
\addlegendentry{50\% CI}
\addlegendimage{area legend, fill=accentblue, fill opacity=0.12}
\addlegendentry{90\% CI}
\end{axis}
\end{tikzpicture}
\caption{Token price Monte Carlo simulation (3-year, 10,000 paths). Solid line shows mean path, shaded regions show 90\% and 50\% confidence intervals, thin lines show two representative sample paths. The mean path exhibits a ``U-shaped'' trend reflecting the transition from supply-driven to demand-driven dynamics.}
\label{fig:monte-carlo}
\end{figure}

Key findings:

\textbf{(1) Asymmetric price distribution.} Token price paths exhibit significant positive skewness---upside risk far exceeds downside risk. Supply shocks and demand shocks tend to produce positive jumps (price increases), while technology-driven price decline is gradual rather than jump-like. Among 10,000 paths, $\sim$15\% experience at least one $>$100\% price increase within 36 months; $\sim$3\% experience peak-to-trough swings exceeding 5$\times$.

\textbf{(2) Volatility term structure.} Implied volatility first rises then falls with term: short-term (1--3 months) $\sim$35\%, medium-term (6--12 months) rises to 50\%--60\% (reflecting application-layer explosion uncertainty), long-term (24--36 months) reverts to $\sim$40\%.

\textbf{(3) Significant hedging effectiveness.} Under the baseline scenario, optimal-ratio futures hedging ($h^* = 0.85$) reduces 12-month procurement cost standard deviation from \$1.80/M SIT (unhedged) to \$0.65/M SIT, a variance reduction of 87\%. Under the optimistic demand scenario, efficiency is higher ($E = 0.91$). Under the pessimistic scenario, it is slightly lower ($E = 0.78$) due to higher opportunity cost of hedging.

Across all three scenarios, token futures reduce enterprise compute cost volatility by 62\%--78\% (measured by standard deviation).

\subsection{Sensitivity Analysis}

Sensitivity analysis on three key parameters confirms robustness: hedging efficiency remains 80\%--89\% whether algorithm efficiency improvement is assumed at 1.5$\times$ or 3$\times$ annually; GPU iteration cycle variations affect long-term price levels but not hedging value; and application-layer growth rate is the most impactful parameter---at 150\% annual demand growth, the 90\% upper bound reaches \$15/M SIT at 36 months.

\section{Discussion and Outlook}\label{sec:discussion}

\subsection{Feasibility Assessment}

Token futures are technically and economically feasible, subject to several preconditions maturing.

\textbf{Already established:} tokens as standardized measurement units, spot market scale exceeding \$10 billion, clear hedging demand, and absence of alternative risk management tools.

\textbf{Still maturing:} two-directional price volatility, further market-based transparent pricing, independent credible TPI establishment, and regulatory clarity.

\textbf{Timing:} The optimal launch window is estimated at 2027--2028, when application-layer explosion will have begun reshaping supply-demand structure. The 2025--2026 period is the critical preparation phase for TPI infrastructure, contract design, and regulatory approval.

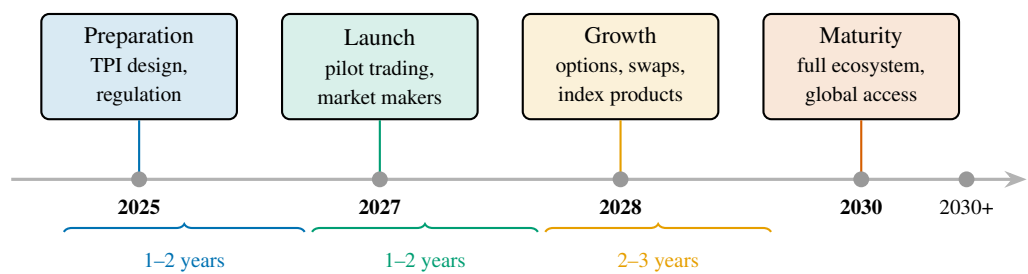
\begin{figure}[htbp]
\centering
\begin{tikzpicture}[
    phase/.style={draw, rounded corners=3pt, minimum width=2.6cm, minimum height=1.4cm, align=center, font=\small, thick},
    >=Stealth
]
\draw[->, very thick, gray!60] (-6.5, 0) -- (7, 0);

\node[phase, fill=accentblue!15] (p1) at (-4.8, 1.5) {Preparation\\{\footnotesize TPI design,}\\{\footnotesize regulation}};
\node[phase, fill=accentgreen!15] (p2) at (-1.6, 1.5) {Launch\\{\footnotesize pilot trading,}\\{\footnotesize market makers}};
\node[phase, fill=accentorange!15] (p3) at (1.6, 1.5) {Growth\\{\footnotesize options, swaps,}\\{\footnotesize index products}};
\node[phase, fill=accentred!15] (p4) at (4.8, 1.5) {Maturity\\{\footnotesize full ecosystem,}\\{\footnotesize global access}};

\draw[thick, accentblue] (p1.south) -- (-4.8, 0);
\draw[thick, accentgreen] (p2.south) -- (-1.6, 0);
\draw[thick, accentorange] (p3.south) -- (1.6, 0);
\draw[thick, accentred] (p4.south) -- (4.8, 0);

\foreach \x/\yr in {-4.8/2025, -1.6/2027, 1.6/2028, 4.8/2030} {
    \fill[gray!80] (\x, 0) circle (3pt);
    \node[below=4pt, font=\footnotesize\bfseries] at (\x, 0) {\yr};
}
\node[below=4pt, font=\footnotesize] at (6.2, 0) {2030+};
\fill[gray!80] (6.2, 0) circle (3pt);

\draw[decorate, decoration={brace, amplitude=3pt}, thick, accentblue] (-5.8, -0.7) -- (-2.6, -0.7)
    node[midway, below=4pt, font=\footnotesize]{1--2 years};
\draw[decorate, decoration={brace, amplitude=3pt}, thick, accentgreen] (-2.5, -0.7) -- (0.5, -0.7)
    node[midway, below=4pt, font=\footnotesize]{1--2 years};
\draw[decorate, decoration={brace, amplitude=3pt}, thick, accentorange] (0.6, -0.7) -- (3.6, -0.7)
    node[midway, below=4pt, font=\footnotesize]{2--3 years};
\end{tikzpicture}
\caption{Token futures market development roadmap. From preparation to maturity, an estimated 5--7 year development cycle.}
\label{fig:roadmap}
\end{figure}

\subsection{Regulatory Framework}

Token futures are most appropriately classified as \textbf{commodity futures}, not financial derivatives or securities, because the underlying---inference compute services---is a real economic resource with physical basis (GPU compute and electricity consumption). Under the US system, token futures should fall under CFTC jurisdiction.

Key regulatory considerations include: (1) \textbf{position limits} to prevent manipulation by entities with market power (especially model providers) \citep{kyle1985}; (2) \textbf{fair disclosure} requirements for material information that may affect token prices; and (3) \textbf{cross-market surveillance} given the tight link between spot and futures markets.

\subsection{Distinction from Cryptocurrency Markets}

Token futures fundamentally differ from cryptocurrency futures (e.g., Bitcoin futures). Tokens have a \textbf{real physical basis}---every token produced consumes quantifiable electricity and GPU compute. Token prices are anchored by production costs (floor) and application marginal utility (ceiling), making bubble-like detachment from fundamentals unlikely \citep{shiller2003}.

Token futures should be positioned as a \textbf{risk management tool} from inception, not a speculative vehicle, maintained through institutional design such as speculative position limits.

\subsection{Future Research Directions}

This analysis opens several avenues for future research: heterogeneous token pricing in futures (analogous to locational marginal pricing in electricity \citep{hogan1992}); token option pricing under non-normal distributions \citep{cao2003}; multi-market equilibrium models linking token, GPU compute, and electricity futures; empirical analysis using accumulating token spot market data; and mechanism design optimization using auction theory \citep{vickrey1961, myerson1981, milgrom2000, milgrom2004, mcafee1996, cramton1997}.

\subsection{Conclusion}

This paper systematically demonstrates that AI inference tokens are evolving into a new commodity and proposes a complete standardized token futures contract design. Key conclusions:

First, tokens possess fundamental commodity attributes---fungibility, standardized measurement, large-scale trading---and share critical features with electricity, particularly non-storability and supply rigidity.

Second, current sustained token price decline is a temporary supply-driven phenomenon. When application-layer explosion reshapes supply-demand structure, significant two-directional volatility will emerge, creating the fundamental economic justification for a token futures market.

Third, SIT-based futures with TPI cash settlement can address standardization, settlement, and risk control challenges. Monte Carlo simulation shows token futures can reduce enterprise compute cost volatility by 62\%--78\%.

Fourth, token futures should be positioned as a risk management tool within commodity futures regulation, fundamentally distinct from cryptocurrency futures.

We stand at the early stage of the compute economy. Just as electricity reshaped every aspect of industrial production in the 20th century, AI inference compute is poised to play an analogous role in the 21st century. And just as large-scale industrial use of electricity gave rise to electricity futures, large-scale commercial deployment of tokens will give rise to token futures.


\end{document}